\newtheorem{theorem}{Theorem}[section]
\newtheorem{lemma}[theorem]{Lemma}
\newcommand{\tr}{\mathop{ \rm tr}}
\title{Revisiting k-means: New Algorithms via Bayesian Nonparametrics
}
\author{Brian Kulis$^1$ and Michael I. Jordan$^2$\\$^1$Department of CSE, Ohio State University\\$^2$Department of EECS and Department of Statistics, UC Berkeley}
\begin{document}
\maketitle

\begin{abstract}
Bayesian models offer great flexibility for clustering applications---Bayesian nonparametrics can be used for modeling infinite mixtures, and hierarchical Bayesian models can be utilized for sharing clusters across multiple data sets.  For the most part, such flexibility is lacking in classical clustering methods such as k-means.  In this paper, we revisit the k-means clustering algorithm from a Bayesian nonparametric viewpoint.  Inspired by the asymptotic connection between k-means and mixtures of Gaussians, we show that a Gibbs sampling algorithm for the Dirichlet process mixture approaches a hard clustering algorithm in the limit, and further that the resulting algorithm monotonically minimizes an elegant underlying k-means-like clustering objective that includes a penalty for the number of clusters.  We generalize this analysis to the case of clustering multiple data sets through a similar asymptotic argument with the hierarchical Dirichlet process.  We also discuss further extensions that highlight the benefits of our analysis: i) a spectral relaxation involving thresholded eigenvectors, and ii) a normalized cut graph clustering algorithm that does not fix the number of clusters in the graph.
\end{abstract}

\section{Introduction}
There is now little debate that Bayesian statistics have had tremendous impact on the field of machine learning.  For the problem of clustering, the topic of this paper, the Bayesian approach allows for flexible models in a variety of settings.  For instance, Latent Dirichlet Allocation~\cite{lda}, a hierarchical mixture of multinomials, reshaped the topic modeling community and has become a standard tool in document analysis.  Bayesian nonparametric models, such as the Dirichlet process mixture~\cite{hjort}, result in infinite mixture models which do not fix the number of clusters in the data upfront; these methods continue to gain popularity in the learning community.

Yet despite the success and flexibility of the Bayesian framework, simpler methods such as k-means remain the preferred choice in many large-scale applications.  For instance, in visual bag-of-words models~\cite{vbow}, large collections of image patches are quantized, and k-means is universally employed for this task.  A major motivation for using k-means is its simplicity and scalability: whereas Bayesian models require sampling algorithms or variational inference techniques which can be difficult to implement and are often not scalable, k-means is straightforward to implement and works well for a variety of applications.

In this paper, we attempt to achieve the best of both worlds by designing scalable hard clustering algorithms from a Bayesian nonparametric viewpoint.  Our approach is inspired by the connection between k-means and mixtures of Gaussians, namely that the k-means algorithm may be viewed as a limit of the expectation-maximization (EM) algorithm---if all of the covariance matrices corresponding to the clusters in a Gaussian mixture model are equal to $\sigma I$ and we let $\sigma$ go to zero, the EM steps approach the k-means steps in the limit.  As we will show, in the case of a Dirichlet process (DP) mixture model---the standard Bayesian nonparametric mixture model---we can perform a similar limiting argument in the context of a simple Gibbs sampler.  This leads to an algorithm with hard cluster assignments which is very similar to the classical k-means algorithm except that a new cluster is formed whenever a point is sufficiently far away from all existing cluster centroids.  Further, we show that this algorithm monotonically converges to a local optimum of a k-means-like objective which includes a penalty for the number of clusters.

We then take a step further into the realm of hierarchical Bayesian models, and extend our analysis to the hierarchical Dirichlet process (HDP)~\cite{teh}.  The HDP is a model for shared clusters across multiple data sets; when we take an analogous asymptotic limit for the HDP mixture, we obtain a novel k-means-like algorithm that clusters multiple data sets with shared cluster structure.  The resulting algorithm clusters each data set into local clusters, but local clusters are shared across data sets to form global clusters.  The underlying objective function in this case turns out to be the k-means objective with additional penalties for the number of local clusters and the number of global clusters.  
%In existing hard clustering literature, we are unaware of existing methods for clustering multiple data sets, and so our analysis yields an algorithm which may be useful in a variety of contexts.

To further demonstrate the practical value of our approach, we present two additional extensions.  First, we show that there is a spectral relaxation for the k-means-like objective arising from the DP mixture.  Unlike the standard relaxation for k-means, which computes the top-k eigenvectors, our relaxation involves computing eigenvectors corresponding to eigenvalues above a threshold, and highlights an interesting connection between spectral methods and Bayesian nonparametrics.  Second, given existing connections between k-means and graph clustering, we propose a penalized normalized cut objective for graph clustering, and utilize our earlier results to design an algorithm for monotonic optimization.  Unlike the standard normalized cut formulation~\cite{norm_cut,multiclass_ncut}, our formulation does not fix the number of clusters in the graph.
We conclude with some results highlighting that our approach retains the flexibility of the Bayesian models while featuring the scalability of the classical techniques.  Ultimately, we hope that this line of work will inspire additional research on the integration of Bayesian nonparametrics and hard clustering methods. 

%\vspace{-.3cm}
\section{Background}
We begin with a short discussion of the relevant models and algorithms considered in this work: mixtures of Gaussians, k-means, and DP mixtures.  
%We will introduce the models as well as briefly discuss some algorithmic details and known connections among the models.

\subsection{Gaussian Mixture Models and k-means}
In a (finite) Gaussian mixture model, we assume that data arises from the following distribution:
\begin{displaymath}
p(\bm{x}) = \sum_{c=1}^k \pi_c {\mathcal N}(\bm{x}~|~\bm{\mu}_c,\Sigma_c),
\end{displaymath}
where $k$ is the fixed number of components, $\pi_c$ are the mixing coefficients, and $\bm{\mu}_c$ and $\Sigma_c$ are the means and covariances, respectively, of the $k$ Gaussian distributions.  In the non-Bayesian setting, we can use the EM algorithm to perform maximum likelihood given a set of observations $\bm{x}_1, ..., \bm{x}_n$.  
Briefly, we initialize the means $\bm{\mu}_c$, covariances $\Sigma_c$, and mixing coefficients $\pi_c$.  Then we alternate between the E-step and M-step.  In the E-step, using the current parameter values, we compute the following quantities for all $i=1,...,n$ and for all $c=1,...,k$:
\begin{displaymath}
\gamma(z_{ic}) = \frac{\pi_c {\mathcal N}(\bm{x}_i~|~\bm{\mu}_c,\Sigma_c)}{\sum_{j=1}^c \pi_j {\mathcal N}(\bm{x}_i~|~\bm{\mu}_j, \Sigma_j)}.
\end{displaymath}
In the M-step, we re-estimate the parameters using the values of $\gamma(z_{ic})$:
\begin{eqnarray*}
\bm{\mu}_c^{new} & = & \frac{1}{n_c} \sum_{i=1}^n \gamma(z_{ic}) \bm{x}_i\\
\Sigma_c^{new} & = & \frac{1}{n_c} \sum_{i=1}^n \gamma(z_{ic}) (\bm{x}_i - \bm{\mu}_c^{new})(\bm{x}_i - \bm{\mu}_c^{new})^T\\
\pi_c^{new} & = & \frac{n_c}{n},
\end{eqnarray*}
where $n_c = \sum_{i=1}^n \gamma(z_{ic})$.  One can show that the EM algorithm converges to a local optimum of the log likelihood function.  Note that the values $\gamma(z_{ic})$ are the probabilities of assigning point $\bm{x}_i$ to cluster $c$, and so the resulting clustering is a soft clustering of the data.

A related model for clustering is provided by the k-means objective function, an objective for discovering a hard clustering of the data.  Given a set of data points $\bm{x}_1, ..., \bm{x}_n$, the k-means objective function attempts to find clusters $\ell_1, ..., \ell_k$ to minimize the following objective function:
\begin{eqnarray*}
\min_{\{\ell_c\}_{c=1}^k} & \sum_{c=1}^k \sum_{\bm{x} \in \ell_c} \|\bm{x} - \bm{\mu}_c\|_2^2\\
\mbox{where} & \bm{\mu}_c = \frac{1}{|\ell_c|} \sum_{\bm{x} \in \ell_c} \bm{x}.
\end{eqnarray*}
The most popular method for minimizing this objective function is simply called the k-means algorithm.  One initializes the algorithm with a hard clustering of the data along with the cluster means of these clusters.  Then the algorithm alternates between reassigning points to clusters and recomputing the means.  For the reassignment step one computes the squared Euclidean distance from each point to each cluster mean, and finds the minimum, by computing
%\begin{displaymath}
$\ell^*(i) = \mbox{argmin}_c \|\bm{x}_i - \bm{\mu}_c\|_2^2.$
%\end{displaymath}
Each point is then reassigned to the cluster indexed by $\ell^*(i)$.  The centroid update step of the algorithm then recomputes the mean of each cluster, updating $\bm{\mu}_c$ for all $c$.

The EM algorithm for mixtures of Gaussians is quite similar to the k-means algorithm.  Indeed, one can show a precise connection between the two algorithms.  Suppose in the mixture of Gaussians model that all Gaussians have the same fixed covariance equal to $\sigma I$.  Because they are fixed, the covariances need not be re-estimated during the M-step.  In this case, the E-step takes the following form:
\begin{displaymath}
\gamma(z_{ic}) = \frac{\pi_c \cdot \mbox{exp} \big ( -\frac{1}{2 \sigma} \|\bm{x}_i - \bm{\mu}_c\|_2^2 \big )}{\sum_{j=1}^k \pi_j \cdot \mbox{exp} \big ( -\frac{1}{2 \sigma} \|\bm{x}_i - \bm{\mu}_j\|_2^2 \big )},
\end{displaymath}
%where $\gamma(z_{ic})$ is the probability of assigning point $i$ to cluster $c$.
%\begin{displaymath}
%\gamma(z_{ic}) = \frac{e^{-\frac{\|\bm{x}_i - \bm{\mu}_c\|_2^2}{2 \epsilon}}}{\sum_{j=1}^k e^{-\frac{\|\bm{x}_i - \bm{\mu}_j\|_2^2}{2 \epsilon}}}.
%\end{displaymath}
It is straightforward to show that, in the limit as $\sigma \to 0$, the value of $\gamma(z_{ic})$ approaches zero for all $c$ except for the one corresponding to the smallest distance $\|\bm{x}_i - \bm{\mu}_c\|_2^2$.  In this case, the E-step is equivalent to the reassignment step of k-means, and one can further easily show that the M-step exactly recomputes the means of the new clusters, establishing the equivalence of the updates.  %One can further show that an expectation of the complete-data log likelihood approaches the k-means objective in the limit as $\epsilon \to 0$.
We also note that further interesting connections between k-means and probabilistic clustering models were explored in~\cite{kurihara}.  Though they approach the problem differently (i.e., not from an asymptotic view), the authors also ultimately obtain k-means-like algorithms that can be applied in the nonparametric setting.

\subsection{Dirichlet Process Mixture Models}
We briefly review DP mixture models~\cite{hjort}.  We can equivalently write the standard Gaussian mixture as a generative model  where one chooses a cluster with probability $\pi_c$ and then generates an observation from the Gaussian corresponding to the chosen cluster.  The distribution over the cluster indicators follows a discrete distribution, so a Bayesian extension to the mixture model arises by first placing a Dirichlet prior of dimension $k$, $\mbox{Dir}(k, \bm{\pi}_0)$, on the mixing coefficients, for some $\bm{\pi}_0$.  If we further assume that the covariances of the Gaussians are fixed to $\sigma I$ and that the means are drawn from some prior distribution $G_0$, we obtain the following Bayesian model:
\begin{eqnarray*}
\bm{\mu}_1, ..., \bm{\mu}_k & \sim & G_0\\
\bm{\pi} & \sim & \mbox{Dir}(k,\bm{\pi}_0)\\
\bm{z}_1, ..., \bm{z}_n & \sim & \mbox{Discrete}(\bm{\pi})\\
\bm{x}_1, ..., \bm{x}_n & \sim & {\mathcal N}(\bm{\mu}_{\bm{z}_i},\sigma I),
\end{eqnarray*}
letting $\bm{\pi} = (\pi_1, ..., \pi_k)$.  One way to view the DP mixture model is to take a limit of the above model as $k \to \infty$ when choosing $\bm{\pi}_0 = (\alpha/k) \bm{e}$, where $\bm{e}$ is the vector of all ones.  
%In this case, there are an infinite number of mixing coefficients $\pi_c$, so the resulting likelihood has the form:
%\begin{displaymath}
%p(\bm{x}) = \sum_{c=1}^{\infty} \pi_c {\mathcal N}(\bm{x}~|~\bm{\mu}_c, \epsilon I).
%\end{displaymath}
%The distribution of the mixing coefficients is drawn from a so-called stick-breaking process.  One imagines a stick of length one, and the mixing coefficients are obtained by appropriately breaking off pieces of the stick.  In particular, we define a set of random variables $\{\beta_c\}_{c=1}^{\infty}$, each chosen independently from a beta distribution with parameters $1$ and $\alpha$.  Then the corresponding mixing coefficients are defined as
%\begin{displaymath}
%\pi_c = \beta_c \cdot \prod_{i=1}^{c-1} (1 - \beta_i).
%\end{displaymath}
%The resulting mixing coefficients drop off exponentially, and one can show that the expected number of clusters in a set of $n$ data points will grow logarithmic in $n$.  Furthermore, the cluster indicators $\bm{z}_1, .., \bm{z}_n$ can be shown to follow the \textit{Chinese restaurant process}, which can be utilized to develop sampling algorithms for inferring clusters in the data (in particular, the Gibbs sampler described below).
One of the simplest algorithms for inference in a DP mixture is based on Gibbs sampling; this approach was utilized by~\cite{west} and further discussed by~\cite{neal_dp}, Algorithm 2.  The state of the underlying Markov chain consists of the set of all cluster indicators and the set of all cluster means.  The algorithm proceeds by first looping repeatedly through each of the data points and performing Gibbs moves on the cluster indicators for each point.  For $i = 1, ..., n$, we reassign $\bm{x}_i$ to existing cluster $c$ with probability
%\begin{displaymath}
$n_{-i,c} \cdot {\mathcal N}(\bm{x}_i~|~\bm{\mu}_c,\sigma I) / Z,$
%\end{displaymath}
where $n_{-i,c}$ is the number of data points (excluding $\bm{x}_i$) that are assigned to cluster $c$.  With probability
\begin{displaymath}
\frac{\alpha}{Z} \int {\mathcal N}(\bm{x}_i~|~\bm{\mu},\sigma I) d G_0(\bm{\mu}),
\end{displaymath}
we start a new cluster.  $Z$ is an appropriate normalizing constant.  If we end up choosing to start a new cluster, we select its mean from the posterior distribution obtained from the prior $G_0$ and the single sample $\bm{x}_i$.  
After resampling all clusters, we perform Gibbs moves on the means: we sample $\bm{\mu}_c$ given all points currently assigned to cluster $c, \forall c$.

We note that one often writes the DP mixture model (adapted to our Gaussian mixture scenario) as follows:
\begin{eqnarray*}
\begin{array}{llll}
G & \sim & \mbox{DP}(\alpha, G_0) &\\
\phi_i & \sim & G & \mbox{for $i = 1, ..., n$}\\
\bm{x}_i & \sim & {\mathcal N}(\phi_i,\sigma I) & \mbox{for $i = 1, ..., n$.}
\end{array}
\end{eqnarray*}
Thus, each $G$ is a draw from the Dirichlet process $\mbox{DP}(G_0, \alpha)$, whose base measure $G_0$ is a prior over means of the Gaussians.  We can think of a draw from $G$ as choosing one of the infinite means $\bm{\mu}_c$ drawn from $G_0$, with the property that the means are chosen with probability equal to the corresponding mixing weights.  As a result, each $\phi_i$ is equal to $\bm{\mu}_c$ for some $c$.

%\vspace{-.3cm}
\section{Hard Clustering via Dirichlet Processes}
In the following sections, we derive hard clustering algorithms based on DP mixture models.  We will analyze properties of the resulting algorithms and show connections to existing hard clustering algorithms, particularly k-means.

\subsection{Asymptotics of the DP Gibbs Sampler}
Using the DP mixture model introduced in the previous section, let us first define $G_0$.  Since we are fixing the covariances, $G_0$ is the prior distribution over the means, which we will take to be a zero-mean Gaussian with $\rho I$ covariance, i.e., $\bm{\mu} \sim {\mathcal N}(\bm{0},\rho I)$.  
Given this prior, the Gibbs probabilities can be computed in closed form.  A straightforward calculation reveals that the probability of starting a new cluster is equal to:
%Given this prior, we can compute the following integral from the Gibbs sampler:
%\begin{equation}
%\int {\mathcal N}(\bm{x}_i~|~\bm{\mu},\epsilon I) dG_0(\bm{\mu}).
%\label{eqn:newclusterint}
%\end{equation}
%A straightforward calculation reveals that this is equal to
\begin{displaymath}
\frac{\alpha}{Z} (2 \pi (\rho + \sigma))^{-d/2} \cdot \mbox{exp} \bigg (-\frac{1}{2(\rho + \sigma)} \|\bm{x}_i\|^2 \bigg ).
\end{displaymath}
%Let us assume that $H$ is a flat prior so that
%\begin{displaymath}
%\int {\mathcal N}(\bm{x}_i~|~\bm{\mu},\epsilon I) d H(\bm{\mu}) = 1.
%\end{displaymath}
%Therefore, the probability of starting a new cluster is equal to
%\begin{displaymath}
%\frac{\alpha}{CZ(n-1 + \alpha)} \cdot \mbox{exp} \bigg (-\frac{1}{2(\rho + \epsilon)} \|\bm{x}_i\|^2 \bigg ),
%\end{displaymath}
%with $C = (2 \pi)^{d/2} (\rho + \epsilon)^{d/2}$.
Similarly, the probability of being assigned to cluster $c$ is equal to
\begin{displaymath}
\frac{n_{-i,c}}{Z} (2 \pi \sigma)^{-d/2} \mbox{exp} \bigg ( - \frac{1}{2 \sigma} \|\bm{x}_i - \bm{\mu}_c\|_2^2 \bigg ).
\end{displaymath}
$Z$ normalizes these probabilities to sum to 1.  We now would like to see what happens to these probabilities as $\sigma \to 0$.  However, in order to obtain non-trivial assignments, we must additionally let $\alpha$ be a function of $\sigma$ and $\rho$.  In particular, we will write
%\begin{displaymath}
$\alpha = (1 + \rho/\sigma)^{d/2} \cdot \mbox{exp}(-\frac{\lambda}{2 \sigma})$
%\end{displaymath}
for some $\lambda$.  Now, let $\hat{\gamma}(z_{ic})$ correspond to the posterior probability of point $i$ being assigned to cluster $c$ and let $\hat{\gamma}(z_{i,new})$ be the posterior probability of starting a new cluster.
%\begin{displaymath}
%\gamma(z_{ic}) = \frac{\frac{n_{-i,c}}{n-1+\alpha} e^{-\frac{\|\bm{x}_i - \bm{\mu}_c\|^2}{2 \epsilon}}}{\sum_{j=1}^k \frac{n_{-i,j}}{n-1+\alpha} e^{-\frac{\|\bm{x}_i - \bm{\mu}_j\|^2}{2 \epsilon}} + \frac{1}{n-1+\alpha}e^{-\frac{\lambda}{2 \epsilon} - \frac{\|\bm{x}_i\|_2^2}{2(\rho+\epsilon)}}}.
%\end{displaymath}
%Similarly, the probability of choosing a new cluster is
%\begin{displaymath}
%\gamma(z_{i,new}) = \frac{\frac{1}{n-1+\alpha} e^{-\frac{\lambda}{2 \epsilon} - \frac{\|\bm{x}_i\|_2^2}{2(\rho+\epsilon)}}}{\sum_{j=1}^k \frac{n_{-i,j}}{n-1+\alpha} e^{-\frac{\|\bm{x}_i - \bm{\mu}_j\|^2}{2 \epsilon}} + \frac{1}{n-1+\alpha}e^{-\frac{\lambda}{2 \epsilon}}}.
%\end{displaymath}
After simplifying, we obtain the following probabilities to be used during Gibbs sampling: $\hat{\gamma}(z_{ic}) = $
\begin{displaymath}
\frac{n_{-i,c} \cdot \mbox{exp} (-\frac{1}{2 \sigma}\|\bm{x}_i - \bm{\mu}_c\|^2 )}{\mbox{exp} \big (-\frac{\lambda}{2 \sigma} - \frac{\|\bm{x}_i\|^2}{2(\rho+\sigma)} \big ) + \sum_{j=1}^k n_{-i,j} \cdot \mbox{exp} (-\frac{1}{2 \sigma} \|\bm{x}_i - \bm{\mu}_j\|^2 ) }
\end{displaymath}
for existing clusters and $\hat{\gamma}(z_{i,new}) = $
\begin{displaymath}
\frac{\mbox{exp} \big (-\frac{\lambda}{2 \sigma} - \frac{\|\bm{x}_i\|^2}{2(\rho+\sigma)} \big )}{\mbox{exp} \big (-\frac{\lambda}{2 \sigma} - \frac{\|\bm{x}_i\|^2}{2(\rho+\sigma)} \big ) + \sum_{j=1}^k n_{-i,j} \cdot \mbox{exp} (-\frac{1}{2 \sigma} \|\bm{x}_i - \bm{\mu}_j\|^2 ) }
\end{displaymath}
for generating a new cluster.  
%The denominator in the above expressions is equal to $Z$, which normalizes the probabilities to sum to 1.
Now we consider the asymptotic behavior of the above probabilities. 
%If $\rho = 1/\epsilon$, then the $\lambda/(2 \epsilon)$ term dominates the data-dependent term in the exponent in the numerator of $\hat{\gamma}(z_{ic})$; in particular, 
The numerator for $\hat{\gamma}(z_{i,new})$ can be written as
\begin{displaymath}
\mbox{exp} \bigg (-\frac{1}{2 \sigma} \bigg [\lambda + \frac{\sigma}{\rho+\sigma} \|\bm{x}_i\|^2 \bigg ] \bigg ).
\end{displaymath}  
It is straightforward to see that, as $\sigma \to 0$ with a fixed $\rho$, the $\lambda$ term dominates this numerator.  Furthermore, all of the above probabilities will become binary; in particular, the values of $\hat{\gamma}(z_{i,c})$ and $\hat{\gamma}(z_{i,new})$ will be increasingly dominated by the smallest value of $\{\|\bm{x}_i - \bm{\mu}_1\|^2, ..., \|\bm{x}_i - \bm{\mu}_k\|^2, \lambda\}$.  In the limit, only the smallest of these values will receive a non-zero $\hat{\gamma}$ value.
The resulting update, therefore, takes a simple form that is analogous to the k-means cluster reassignment step.  We reassign a point to the cluster corresponding to the closest mean, \textit{unless} the closest cluster has squared Euclidean distance greater than $\lambda$.  In this case, we start a new cluster.

If we choose to start a new cluster, the final step is to sample a new mean from the posterior based on the prior $G_0$ and the single observation $\bm{x}_i$.  Similarly, once we have performed Gibbs moves on the cluster assignments, we must perform Gibbs moves on all the means, which amounts to sampling from the posterior based on $G_0$ and all observations in a cluster.  Since the prior and likelihood are Gaussian, the posterior will be Gaussian as well.  If we let $\bar{\bm{x}}_c$ be the mean of the points currently assigned to cluster $c$ and $n_c$ be the number of points assigned to cluster $c$, then the posterior is a Gaussian with mean $\tilde{\bm{\mu}}_c$ and covariance $\tilde{\Sigma}_c$, where
\begin{displaymath}
\tilde{\bm{\mu}}_c = \bigg ( 1 + \frac{\sigma}{\rho n_c} \bigg)^{-1} \bar{\bm{x}}_c,~~\tilde{\Sigma}_c = \frac{\sigma \rho}{\sigma + \rho n_c} I.
\end{displaymath}
As before, we consider the asymptotic behavior of the above Gaussian distribution as $\sigma \to 0$.  The mean of the Gaussian approaches $\bar{\bm{x}}_c$ and the covariance goes to zero, meaning that the mass of the distribution becomes concentrated at $\bar{\bm{x}}_c$.  Thus, in the limit we choose $\bar{\bm{x}}_c$ as the mean.

\renewcommand{\algorithmicrequire}{\textbf{Input:}}
\renewcommand{\algorithmicensure}{\textbf{Output:}}

\begin{algorithm}[t] \small
\centering
\caption{DP-means \label{algo:npmeans}}
\begin{algorithmic}
\begin{minipage}{.9 \textwidth}
\REQUIRE $\bm{x}_1, ..., \bm{x}_n$: input data, $\lambda:$ cluster penalty parameter
\ENSURE Clustering $\ell_1, ..., \ell_k$ and number of clusters $k$
\vspace*{5pt}
\STATE 1.~Init. $k=1, \ell_1 = \{\bm{x}_1, ..., \bm{x}_n\}$ and $\bm{\mu}_1$ the global mean.
\STATE 2.~Init. cluster indicators $z_i = 1$ for all $i = 1,...,n$.
\STATE 3.~Repeat until convergence
\begin{itemize}
\item For each point $\bm{x}_i$
\begin{itemize}
\item Compute $d_{ic} = \|\bm{x}_i - \bm{\mu}_c\|^2$ for $c = 1, ..., k$
\item If $\min_c d_{ic} > \lambda$, set $k = k+1,$  $z_i = k,$ and $\bm{\mu}_k = \bm{x}_i$.
\item Otherwise, set $z_i = \mbox{argmin}_c d_{ic}$.
\end{itemize}
\item Generate clusters $\ell_1, ..., \ell_k$ based on $z_1, ..., z_k$:
%\begin{displaymath}
$\ell_j = \{\bm{x}_i~|~z_i = j\}$.
%\end{displaymath}
\item For each cluster $\ell_j$, compute
%\begin{displaymath}
$\bm{\mu}_j = \frac{1}{|\ell_j|}\sum_{\bm{x} \in \ell_j} \bm{x}$.
%\end{displaymath}
\end{itemize}
\end{minipage}
\end{algorithmic}
\end{algorithm}

Putting everything together, we obtain a hard clustering algorithm that behaves similarly to k-means with the exception that a new cluster is formed whenever a point is farther than $\lambda$ away from every existing cluster centroid.  We choose to initialize the algorithm with a single cluster whose mean is simply the global centroid; the resulting algorithm is specified as Algorithm~\ref{algo:npmeans}, which we denote as the DP-means algorithm.  Note that, unlike standard k-means, which depends on the initial clustering of the data, the DP-means algorithm depends on the order in which data points are processed.  One area of future work would consider adaptive methods for choosing an ordering.

\subsection{Underlying Objective and the AIC}
With the procedure from the previous section in hand, we can now analyze its properties.  A natural question to ask is whether there exists an underlying objective function corresponding to this k-means-like algorithm.
In this section, we show that the algorithm monotonically decreases the following objective at each iteration, where an iteration is defined as a complete loop through all data points to update all cluster assignments and means:
\begin{eqnarray}
\min_{\{\ell_c\}_{c=1}^k} & \sum_{c=1}^k \sum_{\bm{x} \in \ell_c} \|\bm{x} - \bm{\mu}_c\|^2 + \lambda k \nonumber\\
\mbox{where} & \bm{\mu}_c = \frac{1}{|\ell_c|} \sum_{\bm{x} \in \ell_c} \bm{x}.
\label{eqn:npobj}
\end{eqnarray}
This objective is simply the k-means objective function with an additional penalty based on the number of clusters.  The threshold $\lambda$ controls the tradeoff between the traditional k-means term and the cluster penalty term.  We can prove the following:
\begin{theorem}
Algorithm~\ref{algo:npmeans} monotonically decreases the objective given in~\eqref{eqn:npobj} until local convergence.
\end{theorem}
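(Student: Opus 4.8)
The plan is to introduce an auxiliary objective in which the cluster centroids are treated as \emph{free} variables, show that every elementary operation the algorithm performs leaves this auxiliary objective non-increasing, and then observe that it agrees with~\eqref{eqn:npobj} at the end of each iteration; this mirrors the classical monotonicity proof for Lloyd's k-means algorithm, with the new ingredient being the analysis of the cluster-creation branch. Concretely, for a clustering $\ell_1,\dots,\ell_k$ together with arbitrary points $\bm{\mu}_1,\dots,\bm{\mu}_k\in\mathbb{R}^d$, set
\[
\tilde{J}\big(\{\ell_c\},\{\bm{\mu}_c\},k\big)=\sum_{c=1}^k\sum_{\bm{x}\in\ell_c}\|\bm{x}-\bm{\mu}_c\|^2+\lambda k.
\]
The elementary identity $\sum_{\bm{x}\in S}\|\bm{x}-\bm{y}\|^2=\sum_{\bm{x}\in S}\|\bm{x}-\bar{\bm{x}}_S\|^2+|S|\,\|\bm{y}-\bar{\bm{x}}_S\|^2$, where $\bar{\bm{x}}_S$ is the mean of $S$, shows that for a fixed clustering $\tilde{J}$ is minimized over the centroids exactly when each $\bm{\mu}_c=\bar{\bm{x}}_{\ell_c}$, and that this minimum equals the objective in~\eqref{eqn:npobj}. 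It therefore suffices to track $\tilde{J}$ through one sweep of the algorithm using the centroids it actually maintains: the means from the previous centroid update for pre-existing clusters, and the seeding point $\bm{x}_i$ for a freshly created cluster.

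First I would analyze a single reassignment of a point $\bm{x}_i$ with all centroids held fixed. If $\bm{x}_i$ is moved from its current cluster $z_i$ to the nearest existing cluster $c^\ast=\operatorname{argmin}_c d_{ic}$, the only change to $\tilde{J}$ is that the contribution of $\bm{x}_i$ goes from $d_{i,z_i}$ to $d_{i,c^\ast}$; since $z_i$ is one of the clusters in the minimization, $d_{i,c^\ast}\le d_{i,z_i}$, and the penalty $\lambda k$ is unchanged (if $\ell_{z_i}$ is emptied it is removed and $k$ decremented, which only decreases $\tilde{J}$ further). If instead a new cluster is created---which by the algorithm's test occurs only when $\min_c d_{ic}>\lambda$, hence $d_{i,z_i}>\lambda$---then the contribution of $\bm{x}_i$ drops from $d_{i,z_i}$ to $\|\bm{x}_i-\bm{x}_i\|^2=0$ while the penalty rises by $\lambda$, a net change of $\lambda-d_{i,z_i}<0$. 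In both cases $\tilde{J}$ does not increase, so a full pass over all points leaves $\tilde{J}$ non-increasing. The centroid-update step then holds the clustering fixed and replaces each centroid by its cluster mean, which by the identity above can only decrease $\tilde{J}$.

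Finally I would assemble these facts. At the start of an iteration the centroids held by the algorithm are precisely the cluster means (this also holds at initialization, where $\bm{\mu}_1$ is the global mean), so $\tilde{J}$ equals the value of~\eqref{eqn:npobj} for the current clustering; the reassignment pass does not increase $\tilde{J}$, the centroid update does not increase $\tilde{J}$, and at the end of the iteration the centroids are again the cluster means, so $\tilde{J}$ once more equals~\eqref{eqn:npobj}. Hence the value of~\eqref{eqn:npobj} is non-increasing across iterations; it decreases strictly whenever a new cluster is formed or a point moves to a strictly closer centroid, so---under a fixed tie-breaking rule for the $\operatorname{argmin}$, and since there are only finitely many clusterings---the algorithm reaches after finitely many iterations a configuration in which no point moves and no new cluster forms, i.e., local convergence. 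The step I expect to require the most care is verifying that the reassignment pass is monotone, precisely because~\eqref{eqn:npobj} is written in terms of exact cluster means while the reassignment step reassigns points against stale centroids without recomputing any means; the free-centroid objective $\tilde{J}$ is exactly the device that bridges this gap, together with the harmless bookkeeping for emptied clusters and for ties in the $\operatorname{argmin}$.
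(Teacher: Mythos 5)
Your proposal is correct and is essentially the paper's own argument, just carried out in full detail: the paper's proof likewise observes that reassignment (including paying $\lambda$ for a new cluster when the closest centroid is farther than $\lambda$) and the mean update are each non-increasing, with convergence following from finiteness of the set of clusterings. Your explicit free-centroid objective $\tilde{J}$ and the bookkeeping for stale centroids, emptied clusters, and ties are useful elaborations but do not change the route.
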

\begin{proof}
The proof follows a similar argument as the proof for standard k-means.  The reassignment step results in a non-increasing objective since the distance between a point and its newly assigned cluster mean never increases; for distances greater than $\lambda$, we can generate a new cluster and pay a penalty of $\lambda$ while still decreasing the objective.  Similarly, the mean update step results in a non-increasing objective since the mean is the best representative of a cluster in terms of the squared Euclidean distance.
The fact that the algorithm will converge locally follows from the fact that the objective function cannot increase, and that there are only a finite number of possible clusterings of the data.
\end{proof}
Perhaps unsurprisingly, this objective has been studied in the past in conjunction with the Akaike Information Criterion (AIC).  For instance,~\cite{manning} describe the above penalized k-means objective function with a motivation arising from the AIC.  Interestingly, it does not appear that algorithms have been derived from this particular objective function, so our analysis seemingly provides the first constructive algorithm for monotonic local convergence as well as highlighting the connections to the DP mixture model.
Finally, in the case of k-means, one can show that the complete-data log likelihood approaches the k-means objective in the limit as $\sigma \to 0$.  We conjecture that a similar result holds for the DP mixture model, which would indicate that our result is not specific to the particular choice of the Gibbs sampler.  
%We leave the resolution of a general asymptotic result for future work.

%\vspace{-.3cm}
\section{Clustering with Multiple Data Sets}

\begin{figure}
\begin{center}
\includegraphics[width=12cm]{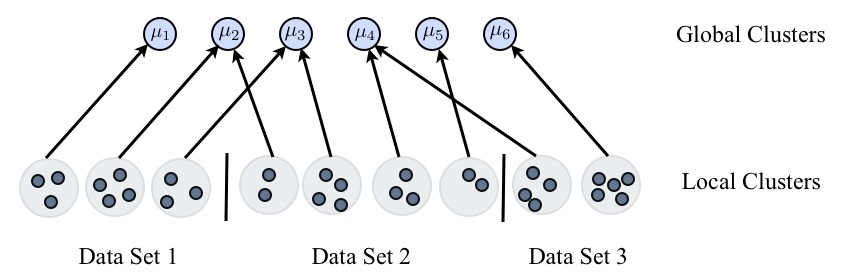}
\end{center}
%\vspace{-.5cm}
\caption{Overview of clustering with multiple data sets.  Each data set has some number of local clusters, and each local cluster is associated with some global mean $\bm{\mu}_p$.  Each global mean $\bm{\mu}_p$ is computed as the mean of all points (across all data sets) associated with that global cluster.  When reassigning points to clusters, the objective function penalizes the formation of either a new local cluster or a new global cluster.  See text for details.}
\label{fig:hdp}
\end{figure}

One of the most useful extensions to the standard DP mixture model arises when we introduce another DP layer on top of the base measure.  Briefly, assume we have a set of data sets, each of which is modeled as a DP mixture.  However, instead of defining the base measure of each DP mixture using $G_0$, the prior over the means, we instead let $G_0$ itself be a Dirichlet process whose base measure is a prior over the means.   The result is that, given a collection of data sets, we can cluster each data set while ensuring that the clusters across the data sets are shared appropriately.
We will not describe the resulting hierarchical Dirichlet process (HDP) nor its corresponding sampling algorithms in detail, but we refer the reader to~\cite{teh} for a detailed introduction to the HDP model and a description of inference techniques.  We will see that the limiting process described earlier for the standard DP can be straightforwardly extended to the HDP; we will outline the algorithm below, and Figure~\ref{fig:hdp} gives an overview of the approach.

%\subsection{Overview of the HDP}
To set the stage, let us assume that we have $D$ data sets, $1, ..., j, ..., D$.  Denote $\bm{x}_{ij}$ to be data point $i$ from data set $j$, and let there be $n_j$ data points from each data set $j$.  The basic idea is that we will locally cluster the data points from each data set, but that some cluster means will be shared across data sets.  
Each data set $j$ has a set of local cluster indicators given by $z_{ij}$ such that $z_{ij} = c$ if data point $i$ in data set $j$ is assigned to local cluster $S_{jc}$.  Each local cluster $S_{jc}$ is associated to a global cluster mean $\bm{\mu}_{p}$.  %Local clusters across multiple data sets may be associated with a single global cluster mean.

Recall the standard form for the DP mixture model under our settings:
\begin{eqnarray*}
\begin{array}{llll}
G & \sim & \mbox{DP}(\alpha, G_0) &\\
\phi_i & \sim & G & \mbox{for $i = 1, ..., n$}\\
\bm{x}_i & \sim & {\mathcal N}(\phi_i,\sigma I) & \mbox{for $i = 1, ..., n$.}
\end{array}
\end{eqnarray*}
For the HDP, we have a set of data sets indexed by $j$, each of which is a DP mixture.  However, instead of defining the base measure of each DP mixture using $G_0$, the prior over the means, we instead let $G_0$ itself be a Dirichlet process whose base measure is a prior over the means.  This yields the following:
\begin{eqnarray*}
\begin{array}{llll}
G_0 & \sim & \mbox{DP}(\gamma, H) &~\\
G_j & \sim & \mbox{DP}(\alpha, G_0) & \mbox{for $j = 1, ..., D$}\\
\phi_{ij} & \sim & G_j & \mbox{for all $i$, $j$}\\
\bm{x}_{ij} & \sim & {\mathcal N}(\phi_{ij},\sigma I) & \mbox{for all $i$, $j$.}
\end{array}
\end{eqnarray*}
Analogous to the standard DP mixture, the $\phi_{ij}$ chooses some global mean $\bm{\mu}_c$, now based on both the local and global Dirichlet processes $G_j$ and $G_0$, respectively.  The prior over the means of the Gaussian is now specified by $H$.

\subsection{The Hard Gaussian HDP}
We can  now extend the asymptotic argument that we employed for the hard DP algorithm to the HDP.  We will summarize the resulting algorithm; the derivation is analogous to the derivation for the single DP mixture case.
As with the hard DP algorithm, we will have a threshold that determines when to introduce a new cluster.  For the hard HDP, we will require two parameters: let $\lambda_{\ell}$ be the ``local" threshold parameter, and $\lambda_g$ be the ``global" threshold parameter.  The algorithm works as follows: for each data point $\bm{x}_{ij}$, we compute the distance to every global cluster $\bm{\mu}_p$.  For any global cluster $p$ for which there is no current association in data set $j$, we add a penalty of $\lambda_{\ell}$ to the distance (intuitively, this penalty captures the fact that if we end up assigning $\bm{x}_{ij}$ to a global cluster that is not currently in use by data set $j$, we will incur a penalty of $\lambda_{\ell}$ to create a new local cluster, which we only want to do if the cluster if sufficiently close to $\bm{x}_{ij}$).  We reassign each data point $\bm{x}_{ij}$ to its nearest cluster, unless the closest distance is greater than $\lambda_{\ell} + \lambda_g$, in which case we start a new global cluster (in this case we are starting a new local cluster and a new global cluster, hence the sum of the two penalties).  Then, for each local cluster, we consider whether to reassign it to a different global mean: for each local cluster $S_{jc}$, we compute the sum of distances of the points to every $\bm{\mu}_p$.  We reassign the association of $S_{jc}$ to the corresponding closest $\bm{\mu}_p$; if the closest is farther than $\lambda_g$ plus the sum of distances to the local cluster mean, then we start a new global cluster whose mean is the local mean.  Finally, we recompute all means $\bm{\mu}_p$ by computing the mean of all points (over all data sets) associated to each $\bm{\mu}_p$.  See Algorithm~\ref{algo:nphmeans} for the full specification of the procedure; the algorithm is derived directly as an asymptotic hard clustering algorithm based on the Gibbs sampler for the HDP.

\renewcommand{\algorithmicrequire}{\textbf{Input:}}
\renewcommand{\algorithmicensure}{\textbf{Output:}}

\begin{algorithm}[t] \small
\centering
\caption{Hard Gaussian HDP \label{algo:nphmeans}}
\begin{algorithmic}
\begin{minipage}{.9 \textwidth}
\REQUIRE $\{\bm{x}_{ij}\}$: input data, $\lambda_{\ell}:$ local cluster penalty parameter, $\lambda_g$: global cluster penalty parameter
\ENSURE Global clustering $\ell_1, ..., \ell_g$ and number of clusters $k_j$ for all data sets $j$
\vspace*{5pt}
\STATE 1.~Initialize $g = 1$, $k_j=1$ for all $j$ and $\bm{\mu}_1$ to be the global mean across all data sets.
\STATE 2.~Initialize local cluster indicators $z_{ij} = 1$ for all $i$ and $j$, and global cluster associations $v_{j1} = 1$ for all $j$.
\STATE 3.~Repeat steps 4-6 until convergence:
\STATE 4.~For each point $\bm{x}_{ij}$:
%\begin{itemize}
%\item For each point $\bm{x}_{ij}$:
\begin{itemize}
\item Compute $d_{ijp} = \|\bm{x}_{ij} - \bm{\mu}_p\|^2$ for $p = 1, ..., g$.  
\item For all $p$ such that $v_{jc} \neq p$ for all $c = 1, ..., k_j$, set $d_{ijp} = d_{ijp} + \lambda_{\ell}$.
\item If $\min_p d_{ijp} > \lambda_{\ell} + \lambda_g$,
\begin{itemize}
\item Set $k_j = k_j + 1$, $z_{ij} = k_j$, $g = g + 1$, $\bm{\mu}_g = \bm{x}_{ij}$, and $v_{j k_j} = g$.
\end{itemize}
\item Else let $\hat{p} = \mbox{argmin}_p d_{ijp}$.
\begin{itemize}
\item If $v_{jc} = \hat{p}$ for some $c$, set $z_{ij} = c$ and $v_{jc} = \hat{p}$.
\item Else, set $k_j = k_j + 1$, $z_{ij} = k_j$, and $v_{j k_j} = \hat{p}$.
\end{itemize}
\end{itemize}
\STATE 5.~For all local clusters:
%\item For all local clusters:
\begin{itemize}
\item Let $S_{jc} = \{\bm{x}_{ij}|z_{ij} = c\}$ and $\bm{\bar{\mu}_{jc}} = \frac{1}{|S_{jc}|} \sum_{\bm{x} \in S_{jc}} \bm{x}.$
\item Compute $\bar{d}_{jcp} = \sum_{\bm{x} \in S_{jc}} \|\bm{x} - \bm{\mu}_p\|^2$ for $p = 1, ..., g$.
\item If $\min_{p} \bar{d}_{jcp} > \lambda_g + \sum_{\bm{x} \in S_{jc}} \|\bm{x} - \bm{\bar{\mu}}_{jc}\|^2$, set $g = g + 1$, $v_{jc} = g$, and
%\begin{displaymath}
$\bm{\mu}_g = \bm{\bar{\mu}}_{jc}.$
%\end{displaymath}
\item Else set $v_{jc} = \mbox{argmin}_p \bar{d}_{icp}$.
\end{itemize}
\STATE 6.~For each global cluster $p = 1, ..., g$, re-compute means:
%\item For each global cluster $p = 1, .., g$, re-compute means:
\begin{itemize}
\item Let $\ell_{p} = \{\bm{x}_{ij}|z_{ij}=c \mbox{ and }v_{jc}=p\}$.
\item Compute
%\begin{displaymath}
$\bm{\mu}_p = \frac{1}{|\ell_p|}\sum_{\bm{x} \in \ell_p} \bm{x}.$
%\end{displaymath}
\end{itemize}
%\end{itemize}
\end{minipage}
\end{algorithmic}
\end{algorithm}

As with the DP-means algorithm, we can determine the underlying objective function, and use it to determine convergence.  Let $k = \sum_{j=1}^D k_j$ be the total number of local clusters, and $g$ be the total number of global clusters.  Then we can show that the objective optimized is the following:
\begin{eqnarray}
\min_{\{\ell_p\}_{p=1}^g} & \sum_{p=1}^g \sum_{\bm{x}_{ij} \in \ell_p} \|\bm{x}_{ij} - \bm{\mu}_p\|_2^2 + \lambda_{\ell} k + \lambda_g g, \nonumber \\
\mbox{where} & \bm{\mu}_p = \frac{1}{|\ell_p|} \sum_{\bm{x}_{ij} \in \ell_p} \bm{x}_{ij}
\label{eqn:hdp_obj}
\end{eqnarray}
This objective is pleasantly simple and intuitive: we minimize the global k-means objective function, but we incorporate a penalty whenever either a new local cluster or a new global cluster is created.  With appropriately chosen $\lambda_{\ell}$ and $\lambda_g$, the result is that we obtain sharing of cluster structure across data sets.  We can prove that the hard Gaussian HDP algorithm monotonically minimizes this objective (the proof is similar to Theorem 3.1).
\begin{theorem}
Algorithm~\ref{algo:nphmeans} monotonically minimizes the objective~\eqref{eqn:hdp_obj} until local convergence.
\end{theorem}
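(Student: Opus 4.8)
The plan is to mimic the proof of Theorem 3.1, showing that each of the three steps of Algorithm~\ref{algo:nphmeans} (the point-reassignment step 4, the local-cluster-reassignment step 5, and the mean-recomputation step 6) leaves the objective~\eqref{eqn:hdp_obj} non-increasing, so that the full loop is monotone; local convergence then follows because the objective is bounded below and there are only finitely many configurations of local assignments $\{z_{ij}\}$ and global associations $\{v_{jc}\}$. Throughout I would keep track of the three terms of the objective separately: the global k-means cost $\sum_p \sum_{\bm{x}_{ij}\in\ell_p}\|\bm{x}_{ij}-\bm{\mu}_p\|^2$, the local-cluster penalty $\lambda_\ell k$, and the global-cluster penalty $\lambda_g g$. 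A subtlety worth stating up front: whenever we change assignments within a step, the means $\bm{\mu}_p$ are \emph{not} yet updated, so to argue monotonicity of the reassignment steps we should, as in standard k-means, first bound the cost with the old means held fixed, and only afterwards invoke step 6.

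For step 4, fix a point $\bm{x}_{ij}$ and hold all means and all other assignments fixed. Its current contribution to the objective is $\|\bm{x}_{ij}-\bm{\mu}_{p}\|^2$ for its current global cluster $p$ (no penalty is charged for keeping an existing association). The candidates are: (i) moving to a global cluster $p'$ already associated with data set $j$, costing $d_{ijp'}=\|\bm{x}_{ij}-\bm{\mu}_{p'}\|^2$ with no new penalty; (ii) attaching to a global cluster $p'$ not yet used by data set $j$, costing $\|\bm{x}_{ij}-\bm{\mu}_{p'}\|^2+\lambda_\ell$, which is exactly the padded $d_{ijp'}$ and reflects the genuine increase $\lambda_\ell k$ incurred by creating a new local cluster $S_{jk_j+1}$; (iii) starting a fresh global cluster with mean $\bm{x}_{ij}$, costing $0+\lambda_\ell+\lambda_g$, matching the threshold $\lambda_\ell+\lambda_g$ in the algorithm. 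The algorithm picks the smallest of these padded quantities, which by construction equals the smallest achievable new value of (that point's contribution $+$ incurred penalties); since the status quo is itself one of the options (it equals the unpadded $d_{ijp}$, which is $\le$ the padded $d_{ijp}$), the objective cannot increase. One should also note that if moving $\bm{x}_{ij}$ empties its old local cluster $S_{jc}$, $k$ drops and the objective decreases further — this only helps.

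For step 5, fix a local cluster $S_{jc}$ with its members and all means. Its current contribution across the affected terms is $\bar d_{jcp}=\sum_{\bm{x}\in S_{jc}}\|\bm{x}-\bm{\mu}_p\|^2$ for its current global association $p$. Re-associating $S_{jc}$ to a different existing $\bm{\mu}_{p'}$ costs $\bar d_{jcp'}$ with no penalty change; spawning a new global cluster with mean $\bm{\bar\mu}_{jc}$ costs $\sum_{\bm{x}\in S_{jc}}\|\bm{x}-\bm{\bar\mu}_{jc}\|^2+\lambda_g$, which is exactly the algorithm's threshold, and is the minimum achievable cost of a brand-new global cluster because the centroid minimizes the sum of squared Euclidean distances. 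Taking the minimum over these options (the status quo being among them) leaves the objective non-increasing; again, if the move empties the old global cluster, $g$ decreases and we gain. Finally, step 6 replaces each $\bm{\mu}_p$ by the centroid of $\ell_p$, which by the same centroid property minimizes $\sum_{\bm{x}_{ij}\in\ell_p}\|\bm{x}_{ij}-\bm{\mu}_p\|^2$ term-by-term, so the k-means term weakly decreases while the two penalty terms are untouched.

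The main obstacle is purely bookkeeping rather than conceptual: one has to be careful that the quantities padded and thresholded in Algorithm~\ref{algo:nphmeans} line up \emph{exactly} with the change in~\eqref{eqn:hdp_obj}, in particular that no penalty is double-counted (e.g. a point that moves into a new local cluster which is then re-associated in step 5, or a cascade in which step 4 simultaneously destroys one local cluster and creates another) and that empty clusters are pruned from the counts $k$ and $g$ so the penalty terms track the true number of nonempty clusters. Once these accounting identities are checked, monotonicity of the composite iteration is immediate, and termination follows exactly as in Theorem 3.1: \eqref{eqn:hdp_obj} is non-negative, hence bounded below, and it is a function of the finitely many discrete configurations $(\{z_{ij}\},\{v_{jc}\})$ — since it strictly decreases whenever the configuration changes and is constant otherwise, the algorithm reaches a configuration it does not leave, i.e. a local optimum, in finitely many iterations. $\qed$
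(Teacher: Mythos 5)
Your proposal is correct and follows exactly the route the paper intends: the paper gives no explicit proof for this theorem, saying only that it is ``similar to Theorem 3.1,'' and your step-by-step accounting (reassignment with padded penalties, local-cluster re-association, centroid update, then finiteness) is a faithful and more detailed elaboration of that same argument. The only minor imprecision --- asserting the objective strictly decreases whenever the configuration changes, which requires a tie-breaking convention --- is present at the same level in the paper's own proof of Theorem 3.1, so it is not a gap relative to the paper.
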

%A possible next step would be to consider a spectral relaxation of the above objective, but the introduction of the local cluster penalties makes it difficult to apply standard spectral relaxations.  We leave as an open question the potentially fruitful direction of finding spectral or semidefinite relaxations of the above objective.

%\vspace{-.3cm}
\section{Further Extensions}
We now discuss two additional extensions of the proposed objective: a spectral relaxation for the proposed hard clustering method and a normalized cut algorithm that does not fix the number of clusters in the graph.

\subsection{Spectral Meets Nonparametric}
%In this section we will show that the DP-means objective developed earlier admits a natural spectral relaxation, which suggests another potential optimization algorithm.  
Recall that spectral clustering algorithms for k-means are based on the observation that the k-means objective can be relaxed to a problem where the globally optimal solution may be computed via eigenvectors.  In particular, for the k-means objective, one computes the eigenvectors corresponding to the $k$ largest eigenvalues of the kernel matrix $K$ over the data; these eigenvectors form the globally optimal ``relaxed" cluster indicator matrix~\cite{zha_spectral}.  A clustering of the data is obtained by suitably post-processing the eigenvectors, e.g., clustering via k-means.

In a similar manner, in this section we will show that the globally optimal solution to a relaxed DP-means objective function is obtained by computing the eigenvectors of the kernel matrix corresponding to all eigenvalues greater than $\lambda$, and stacking these into a matrix.  To prove the correctness of this relaxation, let us denote $Z$ as the $n \times k$ cluster indicator matrix whose rows correspond to the cluster indicator variables $z_{ic}$.  Let $Y = Z (Z^T Z)^{-1/2}$ be a normalized indicator matrix, and notice that $Y^T Y = I$.  We can prove the following lemma.
\begin{lemma}
The DP-means objective function can equivalently be written as
%\begin{displaymath}
$\max_Y \tr(Y^T (K - \lambda I) Y),$
%\end{displaymath}
where the optimization is performed over the space of all normalized indicator matrices $Y$.
\end{lemma}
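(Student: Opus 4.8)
The plan is to mimic the standard matrix reformulation of the ordinary k-means objective and then simply absorb the extra penalty term into the quadratic form. First I would fix notation: let $X$ be the $d \times n$ matrix whose columns are $\bm{x}_1, \dots, \bm{x}_n$, so that $K = X^T X$ (or, more generally, a kernel matrix); let $Z \in \{0,1\}^{n \times k}$ be the hard assignment matrix with $Z_{ic} = 1$ iff $\bm{x}_i \in \ell_c$; and let $Y = Z(Z^T Z)^{-1/2}$. Since $Z^T Z = \mathrm{diag}(n_1, \dots, n_k)$ with $n_c = |\ell_c|$, the matrix $Y$ has orthonormal columns, $Y^T Y = I_k$, and each column $Y_{\cdot c}$ is the indicator vector of $\ell_c$ rescaled by $1/\sqrt{n_c}$.

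Next I would rewrite the within-cluster sum of squares. Using the elementary identity $\sum_{\bm{x} \in \ell_c} \|\bm{x} - \bm{\mu}_c\|^2 = \sum_{\bm{x} \in \ell_c} \|\bm{x}\|^2 - n_c \|\bm{\mu}_c\|^2$ (which follows from $\sum_{\bm{x} \in \ell_c} \bm{x} = n_c \bm{\mu}_c$) together with $\bm{\mu}_c = \tfrac{1}{n_c} X Z_{\cdot c}$, summing over $c$ gives $\sum_{c=1}^k \sum_{\bm{x} \in \ell_c} \|\bm{x} - \bm{\mu}_c\|^2 = \tr(K) - \sum_{c=1}^k \tfrac{1}{n_c} Z_{\cdot c}^T K Z_{\cdot c} = \tr(K) - \tr(Y^T K Y)$, where the last step uses $Y_{\cdot c} = Z_{\cdot c}/\sqrt{n_c}$. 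This is exactly the classical spectral reformulation of k-means.

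The only new ingredient is the penalty $\lambda k$. Here I would observe that $k$, the number of clusters, equals the number of columns of $Y$, and since $Y^T Y = I_k$ we have $k = \tr(Y^T Y) = \tr(Y^T I Y)$ with $I$ the $n \times n$ identity. Hence $\lambda k = \tr(Y^T (\lambda I) Y)$, and combining with the previous display the DP-means objective for a fixed clustering equals $\tr(K) - \tr(Y^T (K - \lambda I) Y)$. Since $\tr(K)$ does not depend on the clustering, minimizing the DP-means objective over clusterings is equivalent to maximizing $\tr(Y^T (K - \lambda I) Y)$ over normalized indicator matrices $Y$ (with an arbitrary number of columns), which is the claimed identity.

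There is essentially no hard step here; the argument is a direct computation. The one point that warrants a brief remark is that the optimization ranges over $Y$ with a variable number of columns — corresponding to variable $k$ — and that empty clusters need not be considered, since deleting an empty cluster strictly lowers $\lambda k$ while leaving the quadratic term unchanged; this is what makes the substitution $k = \tr(Y^T Y)$ legitimate uniformly over the feasible set. I would also note in passing that the derivation is unchanged when $K$ is an arbitrary positive semidefinite kernel matrix, which is precisely what enables the kernelized and graph-clustering variants discussed afterwards.
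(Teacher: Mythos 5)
Your proof is correct and follows essentially the same route as the paper: both reduce to the known identity that the k-means term equals $\tr(K)-\tr(Y^TKY)$ over normalized indicator matrices (which the paper simply cites from the spectral k-means literature and you rederive), and both absorb the penalty via $\lambda k=\tr(Y^T(\lambda I)Y)$. Your additional remark about the variable number of columns and empty clusters is a sensible, if minor, tightening of the argument.
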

\begin{proof}
It was shown in~\cite{zha_spectral} that the standard k-means function can be expressed as a trace maximization of $\tr(Y^T K Y)$, over the space of normalized indicator matrices $Y$.  Noting that $\tr(Y^T (\lambda I) Y) = \lambda k$ as $Y$ is an orthogonal $n \times k$ matrix, the lemma follows.
\end{proof}
Now we perform a standard spectral relaxation by relaxing the optimization to be over all orthonormal matrices $Y$:
\begin{equation}
\max_{\{Y~|~ Y^TY = I\}} \tr(Y^T (K - \lambda I) Y).
\label{eqn:relax}
\end{equation}
%The standard relaxation for k-means states that, for a fixed number of clusters $k$, the optimal relaxed objective is given by the sum of the $k$ largest eigenvalues, and the optimal $Y$ is the matrix of corresponding eigenvectors.  
Using standard arguments, one can show the following result:
%In our case, the number of clusters is not fixed, which complicates the argument slightly.  However, it is straightforward to see that the optimal relaxed solution will still contain a matrix of top eigenvectors (suppose not---if the optimal matrix has $k$ columns, then the top-$k$ eigenvector matrix will not have a smaller trace value). It suffices to consider matrices $Y$ of top-$k$ eigenvectors, for increasing $k$, and choose the one with the largest value of $\tr(Y^T(K- \lambda I) Y)$.  Once the eigenvalues of $K - \lambda I$ become negative, then the trace objective will start to decrease.  Therefore, we only want to take eigenvectors corresponding to non-negative eigenvalues of $K - \lambda I$ or, equivalently, eigenvalues of $K$ that are larger than $\lambda$.  We have proven the following:
\begin{theorem}
By relaxing the cluster indicator matrix $Y$ to be any orthonormal matrix, the optimal $Y$ in the relaxed clustering objective~\eqref{eqn:relax} is obtained by forming a matrix of all eigenvectors of $K$ whose corresponding eigenvalues are greater than $\lambda$.
\end{theorem}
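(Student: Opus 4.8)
The plan is to reduce the relaxed problem~\eqref{eqn:relax} to a classical eigenvalue extremal problem, first for a fixed number of columns and then by optimizing over that number as well. Fix $m$ and consider $\max\{\tr(Y^T M Y) : Y \in \mathbb{R}^{n\times m},\ Y^TY = I\}$ for a symmetric matrix $M$. By the Ky Fan maximum principle (equivalently, the Courant--Fischer characterization applied to partial sums of eigenvalues), this maximum equals $\sum_{i=1}^{m} \nu_i(M)$, where $\nu_1(M)\ge\nu_2(M)\ge\cdots$ are the eigenvalues of $M$ in nonincreasing order, and it is attained when the columns of $Y$ form an orthonormal basis of the span of the top-$m$ eigenvectors of $M$. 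I would apply this with $M = K - \lambda I$, which is symmetric because $K$ is a symmetric positive semidefinite kernel matrix; $K-\lambda I$ has the same eigenvectors as $K$ with eigenvalues shifted from $\mu_i$ to $\mu_i - \lambda$. Hence the best rank-$m$ relaxed solution has objective value $\sum_{i=1}^m (\mu_i - \lambda)$, achieved by the eigenvectors of $K$ attached to its $m$ largest eigenvalues.

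Next I would optimize over $m$ itself, which is legitimate since the relaxation in~\eqref{eqn:relax} allows \emph{any} orthonormal $Y$ and therefore any number of columns (i.e., any number of clusters). Ordering $\mu_1\ge\mu_2\ge\cdots\ge\mu_n$, the partial sum $f(m)=\sum_{i=1}^m(\mu_i-\lambda)$ satisfies $f(m+1)-f(m)=\mu_{m+1}-\lambda$, which is positive precisely when $\mu_{m+1} > \lambda$; since the $\mu_i$ are nonincreasing, once an increment is nonpositive all later increments are nonpositive too. Therefore $f$ is maximized at $m^\ast = |\{i : \mu_i > \lambda\}|$, and combining this with the per-$m$ statement from the previous step, the global maximizer of~\eqref{eqn:relax} is a matrix whose columns are (an orthonormal basis of the span of) exactly the eigenvectors of $K$ with eigenvalue greater than $\lambda$. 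This is the assertion of the theorem.

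The only point requiring care --- rather than a genuine obstacle --- is the interaction between the free column count and eigenvalues equal to $\lambda$: appending such an eigenvector leaves the objective unchanged, so the optimal $Y$ need not be unique, but the stated choice (eigenvalues strictly greater than $\lambda$) is always among the optima, and in the degenerate case where no eigenvalue exceeds $\lambda$ the optimal relaxed value is $0$, attained in the limit of an empty $Y$. I would close by remarking that this recovers the standard k-means spectral relaxation as the special case in which the number of retained eigenvectors is pinned to a prescribed $k$; here the threshold $\lambda$ selects the effective $k$ automatically, exactly paralleling the $\lambda k$ penalty in the DP-means objective~\eqref{eqn:npobj} through the trace reformulation established in the preceding lemma.
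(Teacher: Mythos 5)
Your proof is correct, and it supplies exactly the ``standard arguments'' that the paper invokes without writing out: the Ky Fan trace-maximization principle for a fixed number of columns, followed by optimizing the number of columns via the observation that each additional eigenvector contributes $\mu_{m+1}-\lambda$ to the objective. Your handling of the edge cases (eigenvalues equal to $\lambda$, and no eigenvalue exceeding $\lambda$) is a welcome addition beyond what the paper records.
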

Using this result, one can design a simple spectral algorithm that computes the relaxed cluster indicator matrix $Y$, and then clusters the rows of $Y$, as is common for spectral clustering methods.  Thus, the main difference between a standard spectral relaxation for k-means and the DP-means is that, for the former, we take the top-$k$ eigenvectors, while for the latter, we take all eigenvectors corresponding to eigenvalues greater than $\lambda$.

\subsection{Graph Clustering}
It is also possible to develop extensions to the DP-means algorithm for graph cut problems such as normalized and ratio cut.  

We briefly review connections between k-means and graph clustering.  
We first note that it is straightforward to apply k-means in \textit{kernel space}.  Suppose the data has been mapped via some function $\psi$ so that the data points in feature space are $\psi(\bm{x}_1), \psi(\bm{x}_2), ..., \psi(\bm{x}_n)$.  Further suppose we can compute a kernel function $\kappa(\bm{x}_i,\bm{x}_j) = \psi(\bm{x}_i)^T \psi(\bm{x}_j)$ without explicitly computing the function $\psi$.  Note that the computation between a data point and any cluster mean can be expressed in kernel space by expanding the squared Euclidean distance:
\begin{eqnarray*}
\|\psi(\bm{x}) - \bm{\mu}_c\|_2^2 & = & (\psi(\bm{x}) - \bm{\mu}_c)^T (\psi(\bm{x}) - \bm{\mu}_c)\\
& = & \psi(\bm{x})^T \psi(\bm{x}) - 2 \psi(\bm{x})^T \bm{\mu}_c + \bm{\mu}_c^T \bm{\mu}_c\\
& = & \psi(\bm{x})^T \psi(\bm{x}) - \frac{2 \sum_{\psi(\bm{x}_i) \in \ell_c} \psi(\bm{x})^T \psi(\bm{x}_i)}{|\ell_c|} + \frac{\sum_{\psi(\bm{x}_i),\psi(\bm{x}_j) \in \ell_c} \psi(\bm{x}_i)^T \psi(\bm{x}_j)}{|\ell_c|^2}\\
& = & \kappa(\bm{x},\bm{x}) - \frac{2 \sum_{\psi(\bm{x}_i) \in \ell_c} \kappa(\bm{x}, \bm{x}_i)}{|\ell_c|} + \frac{\sum_{\psi(\bm{x}_i),\psi(\bm{x}_j) \in \ell_c} \kappa(\bm{x}_i,\bm{x}_j)}{|\ell_c|^2}
\end{eqnarray*}
When computing distances via the above method, it is unnecessary to explicitly compute the means of the clusters.  Therefore, the resulting kernel k-means algorithm does not have an explicit mean re-estimation step; instead, the distances to each implicit cluster mean are computed in kernel space and then each point is re-assigned to the cluster corresponding to the nearest implicit cluster mean.  This is repeated until convergence.  Note that the DP-means algorithm and the hard Gaussian HDP can also easily be applied in kernel space.

Another extension of the k-means objective is to introduce a weight $w_i$ for each point, and to minimize a weighted form of the k-means objective function:
\begin{eqnarray*}
\min_{\{\ell_c\}_{c=1}^k} & \sum_{c=1}^k \sum_{\bm{x} \in \ell_c} w_i \|\bm{x}_i - \bm{\mu}_c\|_2^2\\
\mbox{where} & \bm{\mu}_c = \frac{\sum_{\bm{x}_i \in \ell_c} w_i \bm{x}_i}{\sum_{\bm{x}_i \in \ell_c} w_i}.
\end{eqnarray*}
We can now make a connection between the k-means objective function and \textit{graph clustering}~\cite{ratio_cut,norm_cut,multiclass_ncut}.  Given a graph $G = ({\mathcal V}, {\mathcal E}, A)$, where ${\mathcal V}$ is a set of vertices, ${\mathcal E}$ a set of edges, and $A$ is the underlying adjacency matrix for the graph, various methods have been proposed to cluster the vertices in the graph into a disjoint collection of clusters.  Two popular criteria for the graph clustering problem are the ratio cut and normalized cut.  In the ratio cut, one seeks a clustering ${\mathcal V_1}, ..., {\mathcal V_k}$ of the vertices to minimize the following objective:
\begin{displaymath}
\min_{{\mathcal V_1}, ..., {\mathcal V_k}} \sum_{c=1}^k \frac{\mbox{links}({\mathcal V_c}, {\mathcal V} \setminus {\mathcal V_c})}{|{\mathcal V_c}|},~~~~\mbox{(Ratio Cut)}
\end{displaymath}
where $\mbox{links}({\mathcal B}, {\mathcal C}) = \sum_{i \in {\mathcal B}, j \in {\mathcal C}} A_{ij}.$  Thus, the ratio cut criterion attempts to find clusters of vertices such that the ``cut" from clusters to remaining nodes in the graph (normalized by the size of the clusters), is minimized.  The related normalized cut problem minimizes the following:
\begin{displaymath}
\min_{{\mathcal V_1}, ..., {\mathcal V_k}} \sum_{c=1}^k \frac{\mbox{links}({\mathcal V_c}, {\mathcal V} \setminus {\mathcal V_c})}{\mbox{deg}({\mathcal V_c})},~~~~\mbox{(Normalized Cut)}
\end{displaymath}
where $\mbox{deg}({\mathcal B}) = \mbox{links}({\mathcal B},{\mathcal V})$ (or equivalently, the sum of the degrees of the vertices in ${\mathcal B}$).

%In this section, we will illustrate a practical application of our techniques by detailing the resulting normalized cut algorithm.
We state a result proven in~\cite{dhillon} for standard k-way normalized cut.
\begin{theorem}
Let $J(K,W)$ be the weighted kernel k-means objective with kernel matrix $K$ and (diagonal) weight matrix $W$, and let $Cut(A)$ be the k-way normalized cut objective with adjacency matrix $A$.  Let $D$ be the diagonal degree matrix corresponding to $A$ ($D = \mbox{diag}(A \bm{e})$).  Then the following relationship holds:
\begin{displaymath}
J(K,W) = \sigma n + \tr(D^{-1/2} A D^{-1/2}) - (\sigma + 1) k + Cut(A),
\end{displaymath}
when we define $K = \sigma D^{-1} + D^{-1} A D^{-1}$, $W = D$, and $\sigma$ is large enough that $K$ is positive semi-definite.  A similar result holds for ratio cut.
\end{theorem}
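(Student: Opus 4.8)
The plan is to expand the weighted kernel $k$-means objective into a purely algebraic expression in $K$ and the weights, substitute the prescribed $K$ and $W$, and let the normalized-cut objective fall out. First I would put $J(K,W)$ in closed form. Write $s_c = \sum_{\bm{x}_i \in \ell_c} w_i$ for the total weight of cluster $\ell_c$, and expand $\|\psi(\bm{x}_i) - \bm{\mu}_c\|_2^2$ exactly as in the kernelization identity earlier in this section, now carrying the weights $w_i$. Since $\bm{\mu}_c = \frac{1}{s_c}\sum_{\bm{x}_i \in \ell_c} w_i \psi(\bm{x}_i)$, the linear and quadratic terms in $\bm{\mu}_c$ combine, giving
\[
J(K,W) = \sum_{i=1}^n w_i K_{ii} - \sum_{c=1}^k \frac{1}{s_c} \sum_{\bm{x}_i, \bm{x}_j \in \ell_c} w_i w_j K_{ij}
\]
(equivalently $\tr(W^{1/2} K W^{1/2}) - \tr(Y^T W^{1/2} K W^{1/2} Y)$ for a normalized weighted indicator matrix $Y = W^{1/2}Z(Z^TWZ)^{-1/2}$, in the spirit of the Lemma above).

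Next I would substitute $W = D$, so $w_i$ is the degree $d_i$ of vertex $i$ and $s_c = \mbox{deg}({\mathcal V_c})$, and $K = \sigma D^{-1} + D^{-1} A D^{-1}$, whose $(i,j)$ entry is $\sigma/d_i + A_{ii}/d_i^2$ on the diagonal and $A_{ij}/(d_i d_j)$ off it. The diagonal term becomes $\sum_i d_i K_{ii} = \sum_i (\sigma + A_{ii}/d_i) = \sigma n + \tr(D^{-1} A) = \sigma n + \tr(D^{-1/2} A D^{-1/2})$, the last step by cyclicity of the trace. For the within-cluster term, $w_i w_j K_{ij}$ equals $A_{ij}$ when $i \neq j$ and $\sigma d_i + A_{ii}$ when $i = j$, so $\sum_{\bm{x}_i,\bm{x}_j \in \ell_c} w_i w_j K_{ij} = \sigma\, \mbox{deg}({\mathcal V_c}) + \mbox{links}({\mathcal V_c},{\mathcal V_c})$; dividing by $s_c = \mbox{deg}({\mathcal V_c})$ and summing over $c$ yields $\sigma k + \sum_{c=1}^k \mbox{links}({\mathcal V_c},{\mathcal V_c})/\mbox{deg}({\mathcal V_c})$.

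Finally I would use $\mbox{links}({\mathcal V_c},{\mathcal V_c}) = \mbox{deg}({\mathcal V_c}) - \mbox{links}({\mathcal V_c},{\mathcal V}\setminus{\mathcal V_c})$ to rewrite $\sum_c \mbox{links}({\mathcal V_c},{\mathcal V_c})/\mbox{deg}({\mathcal V_c}) = k - Cut(A)$, and collect terms:
\[
J(K,W) = \sigma n + \tr(D^{-1/2} A D^{-1/2}) - \sigma k - \big(k - Cut(A)\big),
\]
which is the claimed identity. For the positive-semidefiniteness proviso I would note that $D^{-1} A D^{-1}$ may be indefinite but $\sigma D^{-1}$ with $\sigma \geq -\lambda_{\min}(D^{-1} A D^{-1})$ makes $K \succeq 0$; since varying $\sigma$ only shifts $J$ by the constant $\sigma(n-k)$ (with $k$ fixed for $k$-way cut), the minimizing clustering is unchanged. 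The ratio-cut version is the same argument with $W = I$, $K = \sigma I + A$, and cardinality $|{\mathcal V_c}|$ in place of $\mbox{deg}({\mathcal V_c})$ throughout. There is no real obstacle beyond bookkeeping: the one thing to watch is keeping the three constant contributions ($\sigma n$, the trace term, and the $\sigma k$ from the within-cluster sum) straight and not dropping the ``$+1$'' in $(\sigma+1)k$ that arises when converting each $\mbox{links}({\mathcal V_c},{\mathcal V_c})/\mbox{deg}({\mathcal V_c})$ into a cut ratio.
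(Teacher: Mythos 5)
Your derivation is correct, and it is worth noting that the paper itself offers no proof of this theorem at all---it is imported by citation from the Dhillon et al.\ graph-clustering work---so your proposal supplies a self-contained argument where the paper has none. The route you take is the standard (and surely the intended) one: expand the weighted objective to $\sum_i w_i K_{ii} - \sum_c s_c^{-1}\sum_{i,j\in\ell_c} w_i w_j K_{ij}$, substitute $W=D$ and $K=\sigma D^{-1}+D^{-1}AD^{-1}$ so that $w_iw_jK_{ij}=\sigma d_i\delta_{ij}+A_{ij}$, and convert $\mbox{links}({\mathcal V}_c,{\mathcal V}_c)/\mbox{deg}({\mathcal V}_c)$ into $1-\mbox{cut ratio}$; all three constants ($\sigma n$, $\tr(D^{-1/2}AD^{-1/2})$, and $(\sigma+1)k$) come out right. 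Two small inaccuracies in the side remarks: first, the positive-semidefiniteness threshold should be $\sigma\geq-\lambda_{\min}(D^{-1/2}AD^{-1/2})$, since $K\succeq 0$ iff $D^{1/2}KD^{1/2}=\sigma I+D^{-1/2}AD^{-1/2}\succeq 0$ by congruence; your stated bound in terms of $\lambda_{\min}(D^{-1}AD^{-1})$ is not sufficient in general (it would need rescaling by $d_{\max}$). Second, the ratio-cut case does not follow by replacing $\mbox{deg}({\mathcal V}_c)$ with $|{\mathcal V}_c|$ ``throughout'': the identity $\mbox{links}({\mathcal V}_c,{\mathcal V}_c)+\mbox{links}({\mathcal V}_c,{\mathcal V}\setminus{\mathcal V}_c)=\mbox{deg}({\mathcal V}_c)$ still normalizes by degree, so with $W=I$ the term $\sum_c\mbox{links}({\mathcal V}_c,{\mathcal V}_c)/|{\mathcal V}_c|$ does not telescope to $k-\mbox{RatioCut}(A)$ and the analogous statement has a slightly different form (it is cleanest for ratio association, or requires working with $\sigma I - D + A$ in the kernel). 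Neither point affects the main identity, which you have proved correctly.
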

%In the standard normalized cut formulation, the first three terms are constants, so minimizing the weighted kernel k-means objective is equivalent to minimizing the normalized cut for the particular choice of kernel and weights given above.
%When we consider the DP extensions to both weighted kernel k-means and normalized cut, however, the third term $-(\sigma+1)k$ is no longer a constant.  
Let the DP-means objective---easily extended to kernel space and to use weights---be given by $J(K,W) + \lambda k$, and let the analogous penalized normalized cut objective be given by $Cut(A) + \lambda' k$.  Letting $\sigma n + \tr(D^{-1/2} A D^{-1/2}) = C$, a constant, we have that $J(K,W) + \lambda k = $
\begin{displaymath}
C + Cut(A) - (\sigma+1)k + \lambda k = C + Cut(A) + \lambda' k,
\end{displaymath}
where $\lambda' = \lambda - \sigma - 1$.  Thus, optimizing the hard DP weighted kernel k-means objective with model parameter $\lambda$ is equivalent to optimizing the penalized normalized cut objective with model parameter $\lambda' = \lambda - \sigma - 1$, and with the construction of $K$ and $W$ as in the above theorem.  Utilizing the results of~\cite{dhillon}, one can show that the distance between a node and a cluster mean can be performed in $O(|E|)$ time.  A straightforward extension of Algorithm~\ref{algo:npmeans} can then be adapted for the above penalized normalized cut objective.

\section{Experiments}

\begin{figure*}[t]
\centering
\begin{tabular}{cccc}
\includegraphics[width=3.6cm,height=3.1cm]{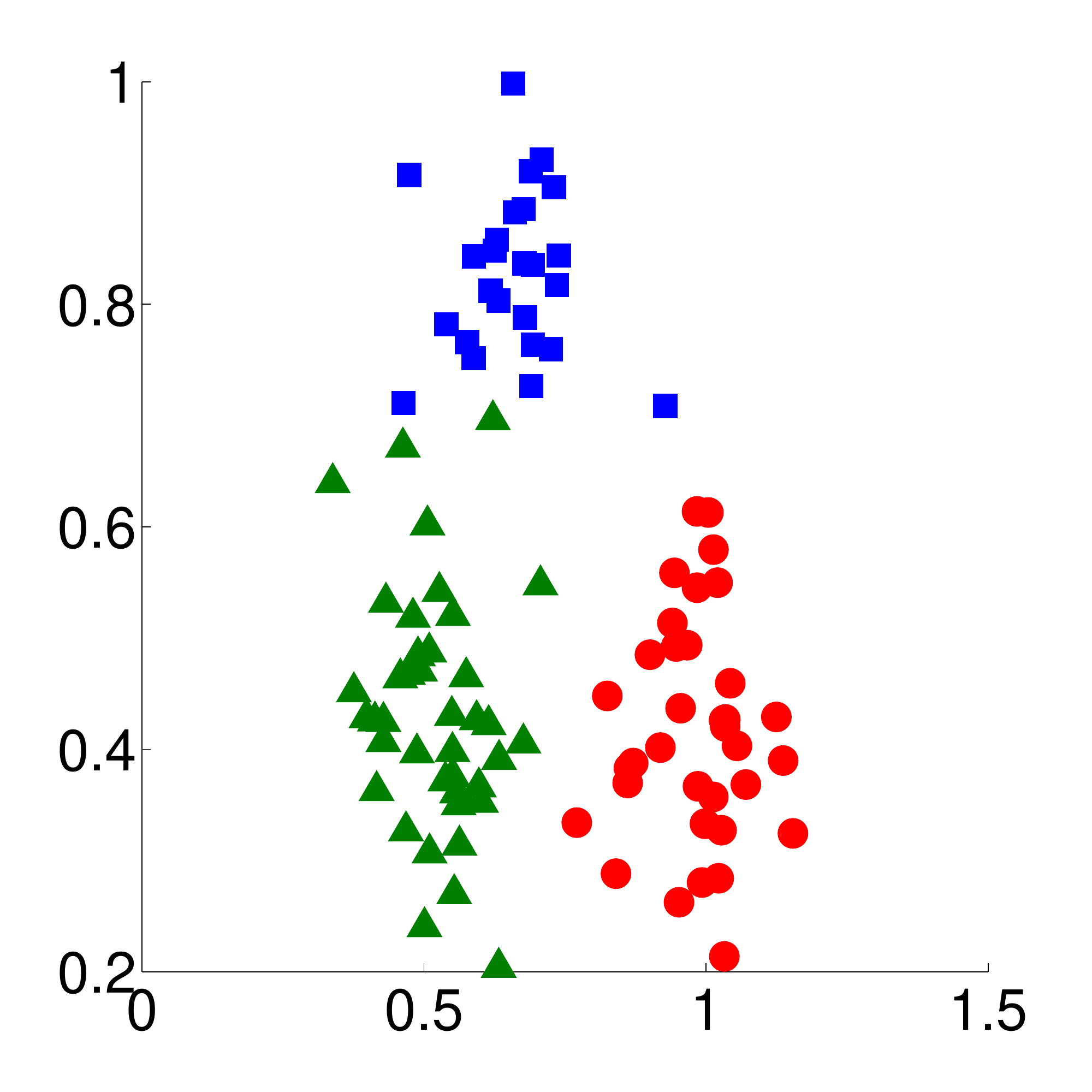} &
\includegraphics[width=3.6cm,height=3.1cm]{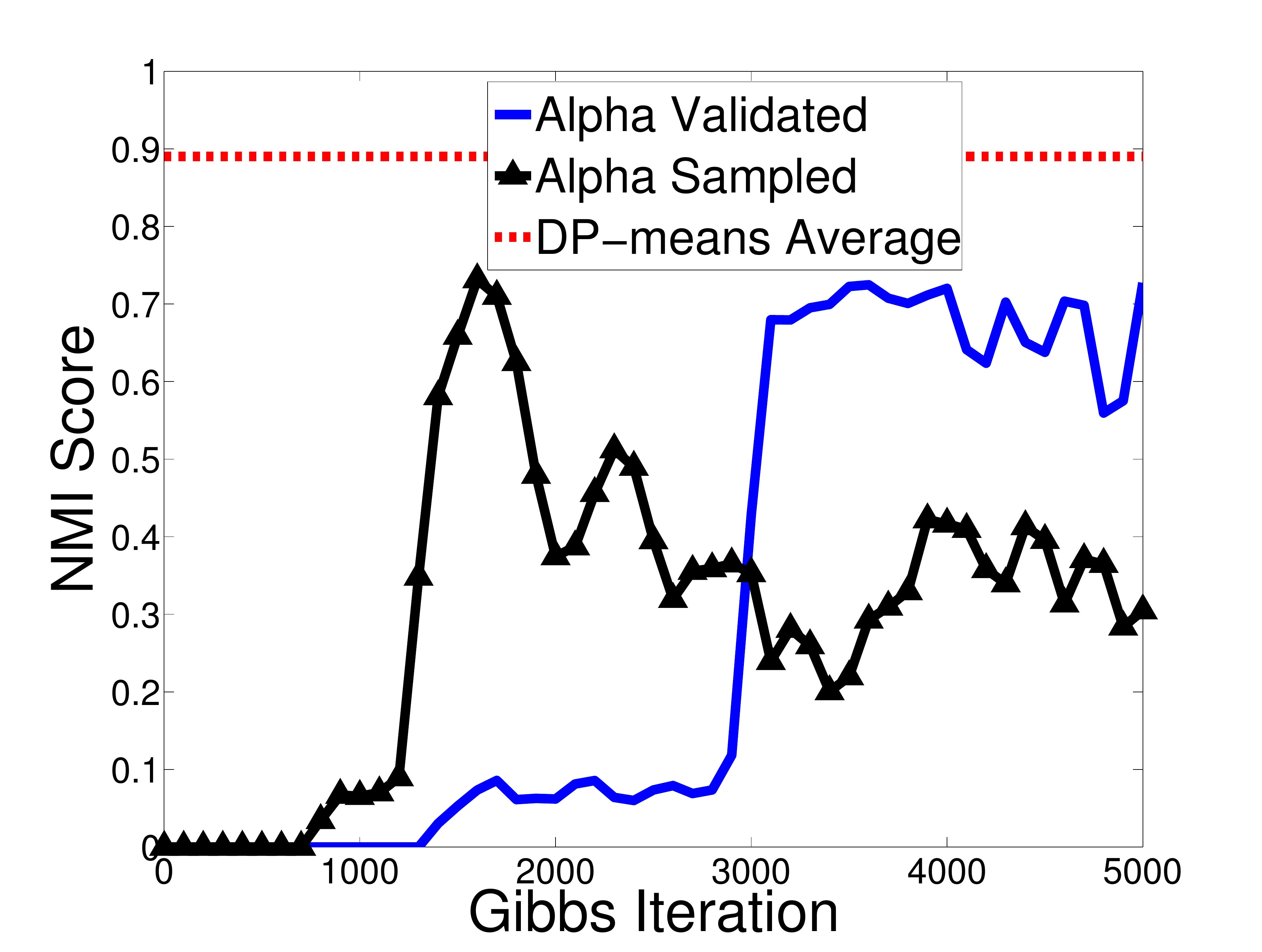} &
\includegraphics[width=3.6cm,height=3.1cm]{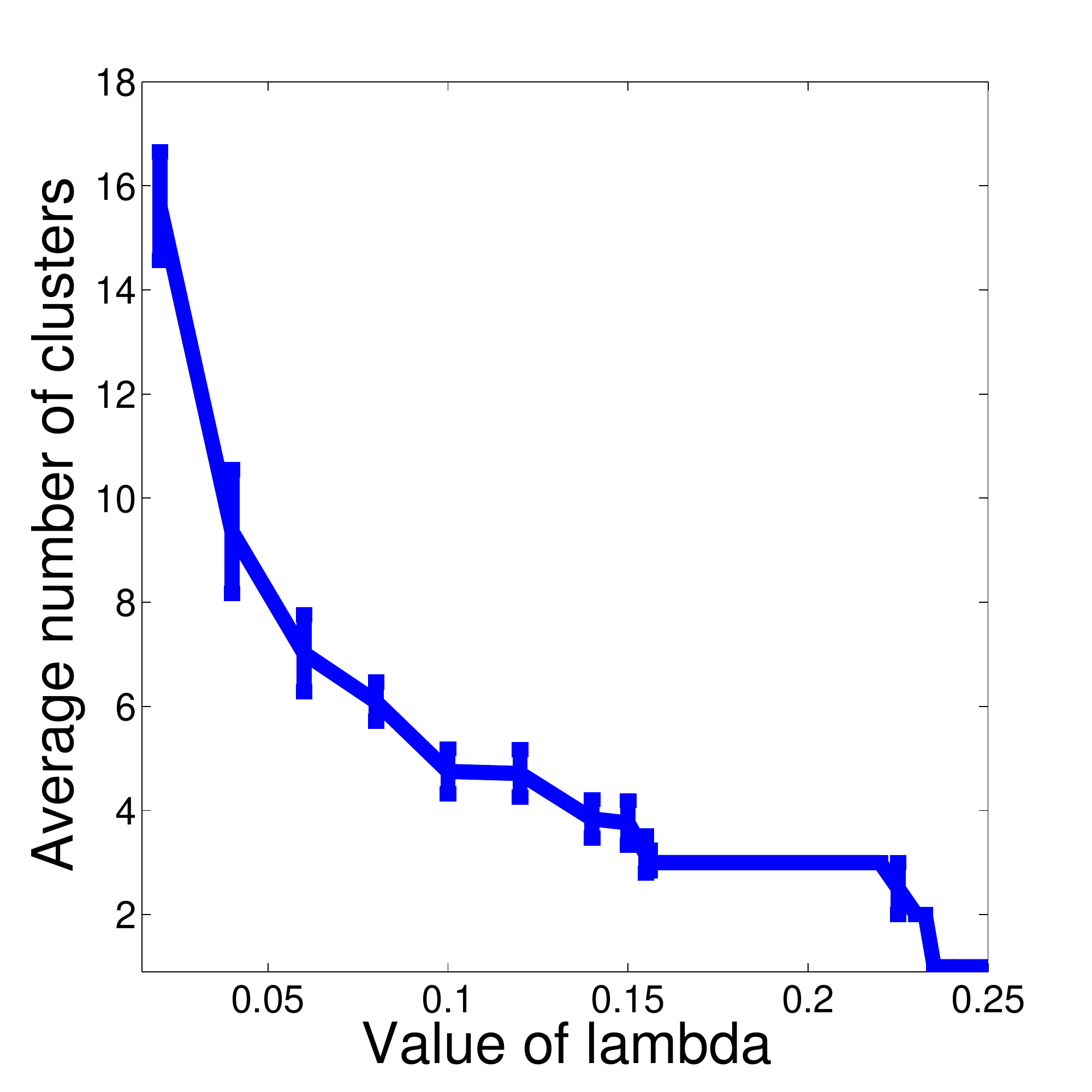} &
\includegraphics[width=3.6cm,height=3.1cm]{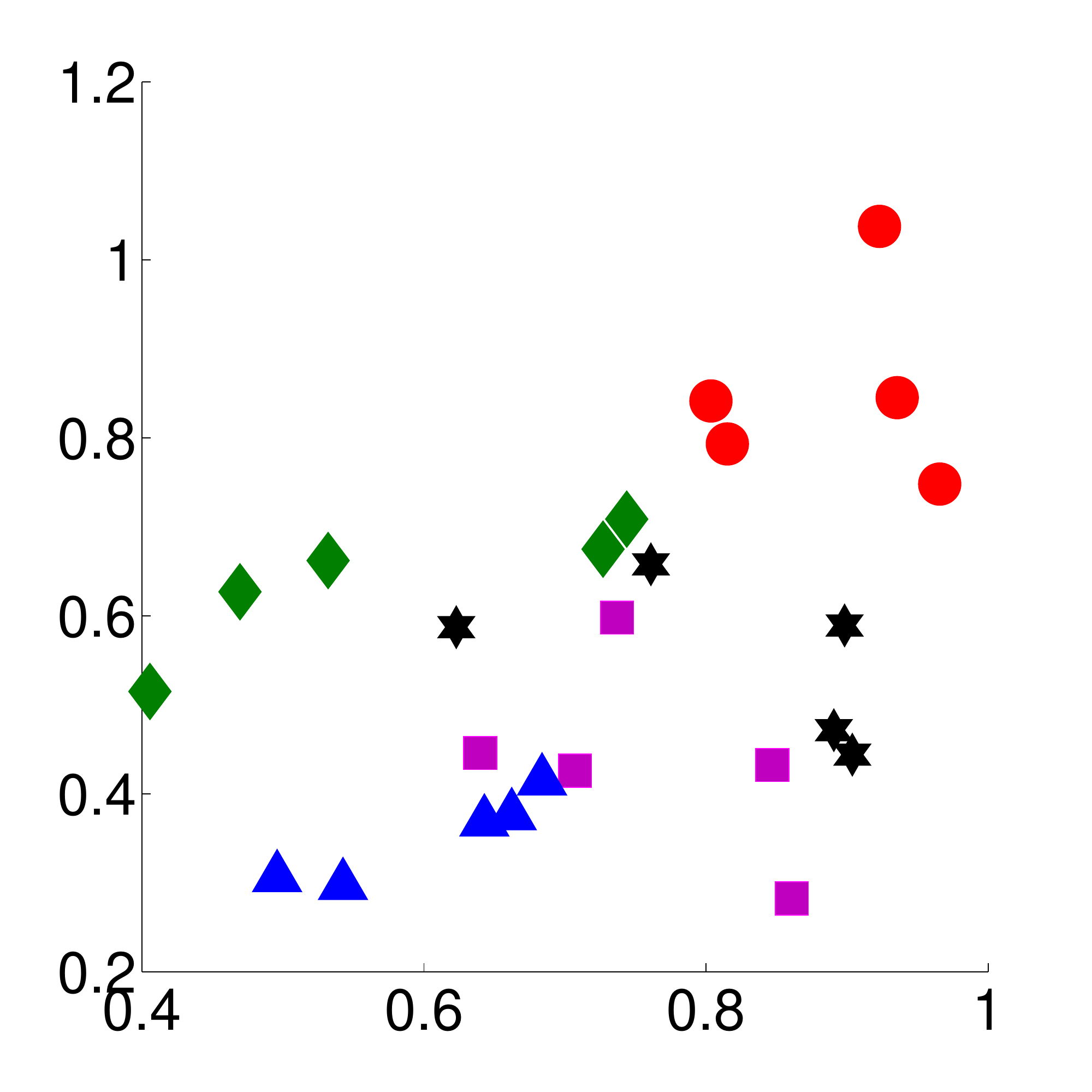}
\end{tabular}
\caption{Synthetic results demonstrating advantages of our method.  a) A simple data set of 3 Gaussians.  b) NMI scores over the first 5000 Gibbs iterations---in contrast, across 100 runs, DP-means always converges within 8 iterations on this data, always returns 3 clusters, and yields an average NMI of .89.  c) Number of clusters in DP-means as a function of lambda. d) One of the 50 data sets for the hard Gaussian HDP experiment.  See text for details.}
\label{fig:gauss}
\end{figure*}

\begin{table}
\centering
\begin{tabular}{|c||c|c|c|}
\hline
Data set & DP-means & k-means & Gibbs\\
\hline \hline
Wine & .41 & .43 & .42\\
\hline
Iris & .75 & .76 & .73\\
\hline
Pima & .02 & .03 & .02\\
\hline
Soybean & .72 & .66 & .73\\
\hline
Car & .07 & .05 & .15\\
\hline
Balance Scale & .17 & .11 & .14\\
\hline
Breast Cancer & .04 & .03 & .04\\
\hline
Vehicle & .18 & .18 & .17\\
\hline
\end{tabular}
%\vspace{-.5cm}
\caption{Average NMI scores on a set of UCI data sets.  Note that Gibbs sampling is handicapped since we utilize a validation set for parameter tuning.}
\label{tab:uci_results}
\end{table}

We conclude with a brief set of experiments to demonstrate the utility of our approach.  The goal is to demonstrate that hard clustering via Bayesian nonparametrics enjoys many properties of Bayesian techniques (unlike k-means) but features the speed and scalability of k-means. 

\subsection{Setup}
Throughout the experiments, we utilize normalized mutual information (NMI) between ground truth clusters and algorithm outputs for evaluation, as is common for clustering applications (it also allows us to compare results when the number of outputted clusters does not match the number of clusters in the ground truth).
Regarding parameter selection, there are various potential ways of choosing $\lambda$; for clarity in making comparisons to k-means we fix $k$ (and $g$) and then find a suitable $\lambda$.  In particular, we found that a simple farthest-first heuristic is effective, and we utilize this approach in all experiments.  Given an (approximate) number of desired clusters $k$, we first initialize a set $T$ with the global mean.  We iteratively add to $T$ by finding the point in the data set which has the maximum distance to $T$ (the distance to $T$ is the smallest distance among points in $T$).  We repeat this $k$ times and return the value of the maximum distance to $T$ in round $k$ as $\lambda$.  We utilize a similar procedure for the hard HDP, except that for $\lambda_{\ell}$ we average the values of the above procedure over all data sets, and for $\lambda_g$ we replace distances of points to elements of $T$ with sums of distances of points in a data set to elements of $T$.  
For Gibbs sampling, we consider the model where the covariances are fixed to $\sigma I$, there is a zero-mean $\rho I$ Gaussian prior on the means, and an inverse-Gamma prior on $\sigma$.  (For the benchmark data, we considered selection of $\sigma$ based on cross-validation, as it yielded better results, though this is against the Bayesian spirit.)  We set $\rho = 100$ throughout our experiments.  We also consider two strategies for determining $\alpha$: one where we place a gamma prior on $\alpha$, as is standard for DP mixtures~\cite{escobar}, and another where we choose $\alpha$ via a validation set.

\subsection{DP-means Results}
We begin with a simple illustration of some of the key properties of our approach on a synthetic data set of three Gaussians, shown in Figure~\ref{fig:gauss}a.  When we run DP-means on this data, the algorithm terminates within 8 iterations with an average NMI score of .89 (based on 100 runs).  In contrast, Figure~\ref{fig:gauss}b shows the NMI scores of the clusterings produced by two Gibbs runs (no burn-in) over the first 5000 iterations, one that learns $\alpha$ via a gamma prior, and another that uses a validation set to tune $\alpha$.  The learning approach does well around 1500 iterations, but eventually more than three clusters are produced, leading to poor results on this data set.  The validation approach yields three clusters, but it takes approximately 3000 iterations before Gibbs sampling is able to converge to three clusters (in contrast, it typically requires only three iterations before DP-means has reached an NMI score above .8).  Additionally, we plot the number of clusters produced by DP-means as a function of $\lambda$ in Figure~\ref{fig:gauss}c; here we see that there is a large interval of $\lambda$ values where the algorithm returns three clusters.  Note that all methods are initialized with all points in a single cluster; we fully expect that better initialization would benefit these algorithms.

Next we consider a benchmarking comparison among k-means, DP-means, and Gibbs sampling to demonstrate comparable accuracies among the methods.  
We selected 8 common UCI data sets, and used class labels as the ground-truth for clusters.  Each data set was randomly split 30/70 for validation/clustering (we stress that validation is used only for Gibbs sampling).  On the validation set, we validated both $\alpha$ and $\sigma$, which yielded the best results.  We ran the Gibbs sampler for 1000 burn-in iterations, and then ran for 5000 iterations, selecting every 10 samples.  The NMI is computed between the ground-truth and the computed clusters, and results are averaged over 10 runs.
The results are shown in Table~\ref{tab:uci_results}.  We see that, as expected, the results are comparable among the three algorithms: DP-means achieves higher NMI on 5 of 8 data sets in comparison to k-means, and 4 of 8 in comparison to Gibbs sampling.  

To demonstrate scalability, we additionally ran our algorithms over the 312,320 image patches of the Photo Tourism data set~\cite{photo_tourism}, a common vision data set.  Each patch is 128-dimensional.  Per iteration, the DP-means algorithm and the Gibbs sampler require similar computational time (37.9 seconds versus 29.4 seconds per iteration).  However, DP-means converges fully in 63 iterations, whereas obtaining full convergence of the Gibbs sampler is infeasible on this data set.

\subsection{Hard Gaussian HDP Results}
As with DP-means, we demonstrate results on synthetic data to highlight the advantages of our approach as compared to the baselines.  We generate parameters for 15 ground-truth Gaussian distributions (means are chosen uniformly in $[0,1]^2$ and covariances are $.01 \cdot I$).  Then we generate 50 data sets as follows: for each data set, we choose 5 of the 15 Gaussians at random, and then generate 25 total points from these chosen Gaussians (5 points per Gaussian).  An example of one of the 50 data sets is shown in Figure~\ref{fig:gauss}d; in many cases, it is difficult to cluster the data sets individually, as shown in the figure.

Our goal is to find shared clusters in this data.  To evaluate the quality of results, we compute the NMI between the ground-truth and the outputted clusters, for each data set, and average the NMI scores across the data sets.  As a baseline, we run k-means and DP-means on the whole data set all at once (i.e., we treat all twenty data sets as one large data set) as well as k-means and DP-means on the individual data sets.  k-means on the whole data set obtains an average NMI score of .77 while DP-means yields .73.  When we run the hard Gaussian HDP, we obtain 17 global clusters, and each data set forms on average 4.4 local clusters per data set.  The average NMI for this method is .81, significantly higher than the non-hierarchical approaches.  When we run k-means or DP-means individually on each data set and compute the average NMI, we obtain scores of .79 for both; note that there is no automatic cluster sharing via this approach.  The hard Gaussian HDP takes 28.8s on this data set, versus 2.7s for k-means on the full data. 

\section{Conclusions and Open Problems}
This paper outlines connections arising between DP mixture models and hard clustering algorithms, and develops scalable algorithms for hard clustering that retain some of the benefits of Bayesian nonparametric and hierarchical modeling.  Our analysis is only a first step, and we note that there are several avenues of future work, including i) improvements to the basic algorithms using ideas from k-means, such as local search~\cite{localsearch}, ii) spectral or semidefinite relaxations for the hard Gaussian HDP, 
iii) extensions to other Bayesian nonparametric processes such as the Pitman-Yor process~\cite{ishwaran,pitman},
iv) generalizations to exponential family mixture models~\cite{banerjee}, and v) additional comparisons to sampling-based and variational inference methods.
\\

\noindent \textbf{Acknowledgements.}  We thank Trevor Darrell and the anonymous reviewers for the corresponding ICML paper for helpful suggestions.

{\small
\bibliography{np_clustering}
\bibliographystyle{plain}
}
\end{document}